\documentclass[conference]{IEEEtran}
\IEEEoverridecommandlockouts
\usepackage{amsmath}
\usepackage{amsthm}
\usepackage{amsfonts}
\usepackage{balance}
\usepackage{graphicx}
\usepackage{algpseudocode}
\usepackage{algorithm}
\usepackage[compact]{titlesec}
\usepackage{gensymb}
\usepackage{bbm}
\usepackage{tikz}
\usepackage{textcomp}
\usepackage{hyperref}
\usepackage{lipsum}

\newtheorem{theorem}{Theorem}
\newtheorem{proposition}{Proposition}
\newtheorem{remark}{Remark}

\def\BibTeX{{\rm B\kern-.05em{\sc i\kern-.025em b}\kern-.08em
    T\kern-.1667em\lower.7ex\hbox{E}\kern-.125emX}}

\newcommand\copyrighttext{%
  \footnotesize \textcopyright 2023 IEEE. Personal use of this material is permitted.  Permission from IEEE must be obtained for all other uses, in any current or future media, including reprinting/republishing this material for advertising or promotional purposes, creating new collective works, for resale or redistribution to servers or lists, or reuse of any copyrighted component of this work in other works.}
\newcommand\copyrightnotice{%
\begin{tikzpicture}[remember picture,overlay]
\node[anchor=south,yshift=10pt] at (current page.south) {\fbox{\parbox{\dimexpr\textwidth-\fboxsep-\fboxrule\relax}{\copyrighttext}}};
\end{tikzpicture}%
}

\begin{document}

\title{Rapid and Scalable Bayesian AB Testing}

\author{
\IEEEauthorblockN{1\textsuperscript{st} Srivas Chennu}
\IEEEauthorblockA{\textit{Apple}\\
srivas.chennu@apple.com}
\and
\IEEEauthorblockN{1\textsuperscript{st} Andrew Maher}
\IEEEauthorblockA{\textit{Apple}\\
andrew\_maher@apple.com}
\and
\IEEEauthorblockN{2\textsuperscript{nd} Christian Pangerl}
\IEEEauthorblockA{\textit{Apple}\\
c\_pangerl@apple.com}
\and
\IEEEauthorblockN{3\textsuperscript{rd} Subash Prabanantham}
\IEEEauthorblockA{\textit{Apple}\\
subash@apple.com}
\and
\IEEEauthorblockN{4\textsuperscript{th} Jae Hyeon Bae}
\IEEEauthorblockA{\textit{Apple}\\
jaehyeon\_bae@apple.com}
\and
\IEEEauthorblockN{5\textsuperscript{th} Jamie Martin}
\IEEEauthorblockA{\textit{Apple*\thanks{* Contribution while at Apple.}}\\
jamiejmartin1@gmail.com}
\and
\IEEEauthorblockN{6\textsuperscript{th} Bud Goswami}
\IEEEauthorblockA{\textit{Apple}\\
b\_goswami@apple.com}
}

\maketitle
\copyrightnotice

\begin{abstract}
AB testing aids business operators with their decision making, and is considered the gold standard method for learning from data to improve digital user experiences. However, there is usually a gap between the requirements of practitioners. The constraints imposed by the statistical hypothesis testing methodologies commonly used for analysis of AB tests. These include the lack of statistical power in multivariate designs with many factors, correlations between these factors, the need of sequential testing for early stopping, and the inability to pool knowledge from past tests. Here, we propose a solution that applies hierarchical Bayesian estimation to address the above limitations. In comparison to current sequential AB testing methodology, we increase statistical power by exploiting correlations between factors, enabling sequential testing and progressive early stopping, without incurring excessive false positive risk. We also demonstrate how this methodology can be extended to enable the extraction of composite global learnings from past AB tests, to accelerate future tests. We underpin our work with a solid theoretical framework that articulates the value of hierarchical estimation. We demonstrate its utility using both numerical simulations and a large set of real-world AB tests. Together, these results highlight the practical value of our approach for statistical inference in the technology industry.
\end{abstract}

\begin{IEEEkeywords}
Large-scale AB testing, Hierarchical Bayesian Modelling, Multivariate sequential testing, Meta-priors
\end{IEEEkeywords}

\section{Introduction}
AB testing aids business operators with their decision making, and is considered the gold standard method for learning from data to improve digital user experiences. However, there is usually a gap between the requirements of practitioners, and the constraints imposed by the statistical hypothesis testing methodologies commonly used for analysis of AB tests, including t-tests and ANOVAs. Let's take some of the most important factors in turn, and illustrate the gap that exists between requirements and common statistical constraints:

\emph{Evaluation of factors and contexts} -- many common methods in AB testing suffer from the multiple comparisons problem when aiming to understand effects of different factors or contexts on the experimental metric of success. For example, one might want to understand how language localisation to different markets impacts a new product feature, or how that feature is received differently across different devices or interfaces. This challenge is broadly framed as multivariate AB testing.

\emph{Statistical power in large-scale tests} -- as the number of categorical factors and possible values of these factors grows, the amount of traffic allocated to each combination of values reduces. Hence a t-test run separately for each such combination would suffer from reduced statistical power, and consequently require the test to be run for a longer duration before true differences can be detected.

\emph{Sequentiality and multiple comparisons} -- as the size of the test grows, t-tests and ANOVAs also incur progressively higher risk of multiple comparisons due to the large number of pairwise comparisons that can be conducted in each combination of factors. Furthermore, data in the digital services industry are typically accrued in a sequential manner. Fixed horizon methods like t-tests, ANOVAs, etc. incur progressively higher risk of false positives if used repeatedly.

\emph{Correlations between variables} -- real-world data often have correlations in them due to the nature of large-scale tests with users who share many common properties. For example, the impact of a copy test in different countries that use the same language is likely to be correlated. Conventional methods do not take this into account.

\emph{Learning across tests} -- Many organizations have a large repertoire of past AB tests. Conventional methods are unable to exploit this rich trove of data from past tests to accelerate future tests.

Here, we offer a methodology that addresses the above limitations, and provides accuracy, speed and richness of learning. We compare the performance of our methodology to a standard baseline in sequential AB testing - namely the mixture Sequential Probability Ratio Test (mSPRT) underpinned by maximum likelihood estimation, which has been deployed on industry-scale AB testing platforms \cite{Pekelis2016}.

Further, we demonstrate how this methodology can be extended to allow the sharing of composite global learnings gleaned from past AB tests.

Specifically, we develop a sequential multivariate testing framework that delivers large-scale multivariate AB testing that:

\begin{itemize}
    \item enables a large volume of statistical inference without incurring excessive false positive risk
    \item increases statistical power by exploiting correlations between experimental variables, thereby accelerating the conclusion of tests
    \item learns from historical tests to deliver faster learning in future tests
\end{itemize}

\section{Background}
Sequential multivariate AB testing can be framed as the problem of learning about the influence of multiple latent, independent variables on one or more observed, dependent variables that we are interested in. Typically, these dependent variables are business metrics that we are interested in optimizing using an AB test. Independent variables are often natural or experimentally driven variables, e.g., the different treatments in the test, the country the user is in, etc.

In the sequential batch learning setting, we receive a new batch of data about the dependent variables at regular updates. For example, we might observe a certain number of clicks or downloads by users who were shown a certain message. Using these data, we update our knowledge of the contribution of the independent variables to the dependent variable. Given this updated knowledge, sequential AB testing involves making inferences about statistically significant differences between the variables: we might want to infer whether or not a certain message performs significantly better than another in a particular country.

To simplify the presentation here, we restrict ourselves to a single, dependent variable influenced by multiple independent variables, though our approach
can be extended to multiple dependent variables. To use a practical example, consider an AB test comparing messages sent to users, each message consisting of a title, an image and body text. The probability of the user responding to the message (the dependent variable) can be modelled as being influenced by multiple independent variables, including properties of the 
content they are shown, e.g., the image, the title and the body, as well as the user's context, e.g., the country they live in, the type of device they are using, etc.

\subsection{Multivariate AB testing}\label{subsec:multivariate_ab}
Standard univariate AB testing would involve testing each content element (image, body or title) at a time, separately in each
country. While this simplifies the consequent statistical analysis, it ignores potential interactions between the image, title and body, and between this content and the user's context. In contrast, multivariate AB testing addresses this limitation by jointly modelling the influence of such content and context factors on the user's response.

The design of such a multivariate AB test requires specification of the content and context factors that influence the probability of a user's response. Again, for the sake of simplicity, we restrict ourselves to categorical factors. For example, the image presented to the user is a categorical content factor that can take one value amongst a set of values, each of which corresponds to a particular image in a fixed set. Similarly, the user's country or device type is a categorical context factor.

Specifically, we have a total of $M$ content and $C$ context factors. A given content factor with index $i$ assumes one-hot encoded representations of categorical values in the set $\mathcal{M}_i$, $i=1,\dots,M$, while a context factor indexed by $j$ assumes one-hot representations of categorical values in $\mathcal{C}_j$, $j=1,\dots,C$. All $\mathcal{M}_i$ and $\mathcal{C}_j$ have a cardinality of at most $N$. Furthermore, $\mathcal{M} = \mathcal{M}_1 \times \cdots \times \mathcal{M}_M$ denotes the set of all content combinations and $\mathcal{C} = \mathcal{C}_1 \times \cdots \times \mathcal{C}_C$ represents the set of all context combinations, respectively. 

Given this formulation, the true but unknown probability of a response is represented as $r_{f}$, where the vector $f = (m,c)$ is an element of $\mathcal{F} = \mathcal{M} \times \mathcal{C}$ and denotes a specific content $m$ presented to the user with a particular context $c$. Given a total of $F = M + C$ content and context factors, each with cardinality of at most $N$, the number of unique $f$ grows exponentially as $O(N^F)$. One of the statistical challenges with multivariate AB testing is that, as $N$ and $F$ grow, it becomes increasingly challenging to estimate each $r_{f}$ with sufficient statistical power. Our hierarchical Bayesian approach ameliorates this challenge, by pooling knowledge across different instances of $f$ to increase statistical power.

\subsection{Sequential hypothesis testing}
Suppose we have some estimate $\hat{r}^{h}_f$ of the true $r_{f}$, where the superscript $h$ denotes the use of a hierarchical Bayesian estimate. The next challenge is to define a robust method to enable experimenters to test hypotheses about statistically significant differences between relevant pairs of estimates. In sequential multivariate testing, experimenters need the ability to evaluate evidence for these hypotheses at each sequential update, in order to stop the test early and to make a roll-out decision without sacrificing statistical validity. This can become a challenge when applying conventional statistical methods for multivariate AB testing at scale, due to the large number of potential multiple comparisons that are possible between pairs of maximum likelihood estimates of $\hat{r}_f$. In practice, this risk is reduced with post hoc error correction methods that adjust p-values to limit the false positive or false discovery rate, e.g., the Bonferroni method \cite{bonferroni1936teoria} or the Benjamini-Hochberg method \cite{Benjamini1995ControllingTesting}.

Here, we use a Bayesian hypothesis testing framework that builds on the mSPRT \cite{Johari2015AlwaysTesting}. The framework sequentially evaluates the relative evidence that there is a statistically significant difference between pairs of Bayesian estimates $\hat{r}^{h}_f$s -- all the while maintaining statistical validity without the need for post hoc corrections.

\subsection{Learning effect-size meta-priors}

A further important challenge comes from trying to expand learnings beyond the remit of a single experiment. Large experimentation platforms can launch hundreds, thousands or even tens of thousands of experiments. From one perspective, each experiment is a distinct entity –- it might, perhaps, try to improve the conversion rate on a certain website or to boost the revenue generation elsewhere. Another perspective is that experiments can be characterised by a set of common features. Within this perspective, pooling information and learnings across experiments can help experimenters build robust intuition as to the sorts of features that yield most impactful experiments.

This pooling is referred to as experimental meta-analysis, and is well-established within psychology and medicine. On large-scale experimentation platforms in the digital services industry it is possible to go a step further. Not only can we build soft intuition related to impactful experiments, we can also quantify and operationalise this knowledge via so-called meta-prior learning or transfer learning \cite{Deng2016ContinuousTesting, Deng2015ObjectiveExperiments}. For example, if historical experiments suggest that headlines generate large impacts on clickthrough rates on an article, but that changes to body text have relatively little effect, this information can be directly incorporated into future experiment designs.

Hierarchical Bayesian inference, in addition to being useful for modelling the variables \emph{within} an experiment, can also be used to conduct such meta-analysis \emph{across} experiments. By encoding distributional assumptions, we can learn the latent hyperparameters that can explain the impact of different experimental interventions. We can also plug these learnt parameters back into an automated hypothesis testing framework to enable more refined and accelerated decision-making that benefits from knowledge built up over past experiments. Our third contribution is exactly this: we take a large suite of historical real-world experiments and demonstrate the value of this learning for enhancing future experiments.

\section{Related Work}
Hierarchical Bayesian inference is a well-established methodology previously articulated by Gelman and others \cite{gelman2004bayesian, Gelman2012WhyComparisons}. In the industry context, previous work has popularised the idea of using Bayesian inference in digital experimentation, for reducing estimation error with shrinkage \cite{Chennu2021SmoothFeedback, Dimmery2019ShrinkageExperiments, Xu2022EmpiricalExperiments}. Sequential statistical inference using Bayes factors \cite{Schonbrodt2017SequentialDifferences, Lindon2022Anytime-ValidAdjustment} has seen a recent upsurge in interest. The general methodology of Bayesian linear modelling has been successfully applied to realise contextual optimisation using multi-armed bandits \cite{Hill2017AnOptimization, Nabi2022BayesianBayes}. Further, there is a rich history of applying Bayesian hypothesis testing after specifying a suitable distribution model for the data accrued during an experiment \cite{Kass1993BayesPractice}. Our work on learning meta-priors for effect sizes across experiments relates to existing work in this space \cite{Deng2016ContinuousTesting, Deng2015ObjectiveExperiments}. We build upon these ideas to propose a common, cohesive framework that can be used to learn both within and across experiments. In particular, our approach extends \cite{Hill2017AnOptimization} with a hierarchical Bayesian framework to support robust sequential, multivariate hypothesis testing. Further, we demonstrate that this hierarchical framework can be extended to learn effects across experiments as well. In doing so, we exploit the scale of modern digital experimentation to achieve greater speed and statistical robustness.

\section{Contribution}
Our contribution combines the following key ideas that employ Bayesian inference to enable rapid, robust large-scale multivariate AB testing:

\begin{itemize}
\item hierarchical Bayesian inference for sequential estimation of user response probabilities modelled as multivariate distributions
\item Bayesian hypothesis testing to sequentially evaluate hypotheses comparing multivariate response probabilities
\item hierarchical Bayesian inference of priors for treatment effect sizes from past tests
\end{itemize}

While these ideas have been described in the previous literature highlighted above, we describe a cohesive integration of these ideas applied in practice. Using real-world examples, we comparatively evaluate our solution and demonstrate that it enables practical multivariate AB testing that is of interest to a broad cross-section of the AB testing community.

\section{Methodology}
We describe our method in three parts -- hierarchical Bayesian inference for estimating response probabilities, sequential hypothesis testing using Bayes factors, and hierarchical Bayesian inference for learning effect size meta-priors.

\subsection{Hierarchical Bayesian inference} \label{subsec:HB_inference}

Our method extends the classical generalised linear model (GLM) and introduces a hierarchical Bayesian prior on the relationship between the dependent and independent variables. In an AB test that includes the user's country as a context factor, we treat the contribution of this factor as a random variable with a prior. We model these priors themselves as random draws from a common meta-prior representing the overall contribution of all countries pooled together. Together, these priors constitute a hierarchy of distributions. At every sequential update of new data, all the priors in the hierarchical model are jointly updated using Bayesian inference, so that we can estimate posterior distributions at each level in the hierarchy. To do this efficiently and quickly at scale, we use parallelised Markov Chain Monte Carlo (MCMC) estimation implemented in numpyro \cite{phan2019composable,bingham2019pyro} and JAX \cite{jax2018github}.

More formally, the probability of a user response $\hat{r}^{h}_f$ given a factor vector $f$ is modelled as:
\begin{align}
    \hat{r}^{h}_f = g(X\beta + \epsilon),
\end{align}

\noindent where $X$ is the design matrix derived from the experimental specification, $\beta$s are coefficients modelling the contributions of the experimental factors represented as Gaussian distributions, and $\epsilon \sim \text{Normal}(0, 1)$ represents zero-mean noise.

The one-hot encoded design matrix $X$ specifies the contribution of these $\beta$s to each $\hat{r}^{h}_f$. The number of rows of $X$ grows as $O(N^F)$, corresponding to each possible value of $f$. However, the number of columns of $X$ grows as $O(NF)$, where each column represents the contribution of a particular value $v$ of a factor vector $f$. Specifically, each row $X_k$ of $X$ corresponds to a factor vector $f = (m, c)$ with $m = (m_1,\dots,m_M)$ and $c = (c_1, \dots, c_C)$ where $m_i$ and $c_j$ are one-hot encoded members of the above introduced sets $\mathcal{M}_i$ and $\mathcal{C}_j$, respectively. Hence, by slightly abusing notation and assuming that the vector $f$ is the concatenation of its one-hot encoded components, $X_k$ can be written as

\begin{align} \label{eq:X}
    X_{k} = f, \quad 
    f\in \mathcal{F}.
\end{align}

The above design matrix can be extended to support non-linear relationships by including $\beta$s corresponding to interactions between factors. For example, modelling interactions between pairs of factor values $v_1$ and $v_2$ would result in $X$ having $O(N^{2}F^{2})$ columns.

Using a hierarchical Bayesian formulation, the $\beta$s for each experimental factor are sampled from the following generative model:

\begin{align}
\beta \sim \text{Normal}(\mu,\,\sigma^{2}), \label{eq:beta_full_model}\\
\mu \sim \text{Normal}(0, 100), \label{eq:mu_full_model} \\
\sigma \sim \text{HalfCauchy}(5)\label{eq:sigma_full_model}.
\end{align}

In the above model, $\mu$ and $\sigma$ represent the mean and standard deviation
of the prior distribution from which individual $\beta$s are sampled.

When we fit the above model, we estimate the Bayesian posterior distribution of each $\beta$,
but also that of $\mu$, $\sigma$ and $\hat{r}^{h}_f$. As we are modelling $\hat{r}^{h}_f$ as probabilities, $g$ is the sigmoid function.

This model is fitted to binomially distributed counts of
treatment assignments $a$ and responses $r$, which relate to $\hat{r}^{h}_f$ as:

\begin{equation}
    r = \text{Binomial}(a, \hat{r}^{h}_f).
\end{equation}

We refer readers to Appendix \ref{app:ps_code_BHM} for a more detailed discussion of implementation details. We have also included pseudocode to facilitate reproducibility of our work.

\subsubsection{Marginal probability estimation}
\label{marginal_probability_estimation}

We use the $\hat{r}^{h}_f$s estimated above to calculate response probability distributions per content factor combination, marginalising over context factor combinations. This enables experimenters to gain global insights about the performance of the content presented to users, marginalising over contextual factors.

These marginal response probability distributions $r_m$ are computed as weighted averages across the context factors $C$ as:

\begin{equation}
    \hat{r}^{h}_m = \sum_{c \in C} w_{c} \; \hat{r}^{h}_{f}\;, \quad
    w_{c} \propto t_{c}
\end{equation}

where $t_c$ is the number of treatment assignments per context combination $c$.

\subsubsection{Comparison with maximum likelihood estimation}
We compare our hierarchical Bayesian approach to a popular baseline - the maximum likelihood estimation (MLE) method, which underpins both fixed horizon AB testing methods like the t-test and sequential AB testing methods like the mSPRT. Here, the maximum likelihood estimate of $r_f$ and the variance of this estimate would, respectively, be:

\begin{align*}
\hat{r}^{l}_f &= \frac{r}{a}, \\
\hat{v}^{l}_f &= \frac{\hat{r}^{l}_f(1 - \hat{r}^{l}_f)}{a}.
\end{align*}

Maximum likelihood estimation is commonly used in frequentist statistical inference, e.g., in the t-test. From a statistical inference perspective, a key advantage of hierarchical estimation is that it shares knowledge across $\hat{r}^{h}_f$s via the hierarchical prior, and hence reduces variance faster than maximum likelihood estimation. From a machine learning perspective, the hierarchical prior acts as regularisation term, increasing robustness of the model by reducing the risk of overfitting.

In Appendix \ref{app:A} we prove that, for a simplified hierarchical model and under certain conditions, this partial pooling mechanism does indeed lead to faster variance reduction of the Bayesian estimator as compared to the maximum likelihood estimator. As described in Sec. \ref{evaluation}, we complement this theoretical result with detailed numerical simulations and evidence from real-world experiments that demonstrate the value of the hierarchical Bayesian approach.

\subsection{Statistical hypothesis testing}
\label{statistical_hypothesis_testing}

We adopt a hypothesis testing framework that uses Bayes factors for sequentially evaluating relative evidence for statistically significant differences between a pair of estimates of $r_f$, given by $\hat{r}_{f,A}$ and $\hat{r}_{f,B}$, estimated using the hierarchical Bayesian or the maximum likelihood method. We entertain a pair of hypotheses: a null hypothesis $\text{H}_0$ that the $\hat{r}_f$s are statistically equivalent, and an alternative hypothesis $\text{H}_1$ that they are significantly different. We frame these as:

\begin{align}
\text{H}_0&:\quad\hat{r}_{f,A} - \hat{r}_{f,B} = 0, \\
\text{H}_1&:\quad\hat{r}_{f,A} - \hat{r}_{f,B} \neq 0.
\end{align}

Assuming that $\text{H}_0$ and $\text{H}_1$ are equally likely a priori, we construct prior distributions of the true mean difference between the $\hat{r}_f$s under both hypotheses. $\text{H}_0$ is framed simply as a point at zero, and $\text{H}_1 \sim \mathrm{Normal}(0, \tau)$, i.e., a zero mean normal distribution with variance $\tau$. The value of $\tau$, which effectively represents the variability in true effect sizes in AB tests, is set in one of three ways:

\begin{itemize}
    \item \emph{fixed} -- $\tau$ is set to a fixed value
    \item \emph{dynamic} -- $\tau$ is set to the squared observed difference between the pair of $\hat{r}_f$s, i.e., $\tau = (\hat{r}_{f,A} - \hat{r}_{f,B})^2$ \cite{Zhao2018SafelyDesign}
    \item \emph{learnt} -- $\tau$ is set to a value learnt from observed effect sizes in past tests (see Sec. \ref{effect_size_prior_learning})   
\end{itemize}

Given these priors, we calculate the Bayes factor \cite{Kass1995BayesFactors} as a likelihood ratio. Specifically, we compute the relative likelihood
of $\hat{r}_{f,A} - \hat{r}_{f,B}$ under the priors corresponding to the two
hypotheses:

\begin{equation}
    \mathrm{K}_{\hat{r}_{f,A},\hat{r}_{f,B}} = \frac{p(\hat{r}_{f,A} - \hat{r}_{f,B} \; | \; \text{H}_1)}
    {p(\hat{r}_{f,A} - \hat{r}_{f,B} \; | \; \text{H}_0)}.
\end{equation}

Bayes factors quantify the relative evidence for the two hypotheses. Values near 1 indicate absence of evidence for any difference between the treatments, whereas larger
values suggest evidence that a difference exists. Bayes factors are interpreted as statistical confidence by inverting them to sequential p-values as:

\begin{equation} \label{eq:seq_p_value}
    p = \frac{1}{\mathrm{K}_{\hat{r}_{f,A},\hat{r}_{f,B}}}.
\end{equation}

\noindent The smallest p-value observed over updates is retained and reported for statistical interpretation and decision making.

\subsubsection{Multiple comparisons}
We estimate Bayes factors comparing $\hat{r}_f$s of each pair of content values. For example, in a multivariate AB test where users in 4 countries are sent a message containing one of 2 titles and 2 images, which they might view on one of 4 device types, there are 16 ($4^2$) combinations of country and device type contexts, and 4 ($2^2$) possible combinations of title and image content. Consequently, we calculate Bayes factors for a total of $16 * 6 = 96$ pairwise comparisons, corresponding to ${4 \choose 2} = 6$ comparisons per combination of context values.

As the number of such multiple comparisons grows very quickly with large experimental designs, conventional statistical methods for multivariate statistical analysis require post hoc correction of p-values to manage the risk of generating false positives.

We demonstrate that our hierarchical Bayesian estimation approach reduces the risk of false positives without the need for post hoc correction. This is because the hierarchical model ``shrinks'' the $\hat{r}_f$s towards each other as a function of their respective variances \cite{Gelman2012WhyComparisons}, thereby reducing spurious differences between them.

\subsection{Effect-size prior learning}
\label{effect_size_prior_learning}

The same hierarchical Bayesian inference framework can be applied to the problem of effect size prior learning. In contrast to common meta-learning frameworks, we care much less about the unobserved, latent average effect size. Indeed, we assume that, in aggregate, experiments cancel each other out, and that the average effect size is zero. Instead, we focus on learning the variance in effect size – i.e., what is the dispersion among experiments?

We assume that the observed effect size in an experiment is a random draw from a distribution with unknown parameters, and use hierarchical Bayesian estimation to learn these parameters. Formally, we model the experiment-level effect size $\delta_i$ as a Normal distribution:

\begin{align}
    \delta_i \sim \text{Normal}(0, \sigma_i^2 + \tau), \\
    \tau \sim \text{HalfCauchy}(5).
\end{align}

As highlighted, in our model we enforce the distribution to be zero-centred, and the impact on dispersion is governed by the experiment-level signal-to-noise ratio as well as a global, learnable, dispersion parameter $\tau$.

\subsubsection{Incorporating meta-priors into experimentation}

The learnt meta-priors are useful for informing better experiments; the $\tau$  parameter summarises the overall utility of the experimentation platform in detecting statistical effects. Further, it can be directly operationalised, to enhance the sequential hypothesis testing methodology. As previously noted, the mSPRT contains a Goldilocks parameter that must be set correctly to achieve well-powered experiments. This parameter, $\tau$ is exactly what we learn in the above formulation. We can input the learnt value directly back into future experiments for faster learning on the same volume of data.

\section{Evaluation}
\label{evaluation}

We evaluate the performance of the hierarchical Bayesian approach using both large-scale numerical simulations and results from real-world AB test data. We also compare this performance to the commonly used baseline in AB testing - the maximum likelihood method that underpins both the t-test and the mSPRT.

\subsection{Simulation framework}

\begin{figure}[hbt!]
  \centering
  \includegraphics[width=\linewidth]{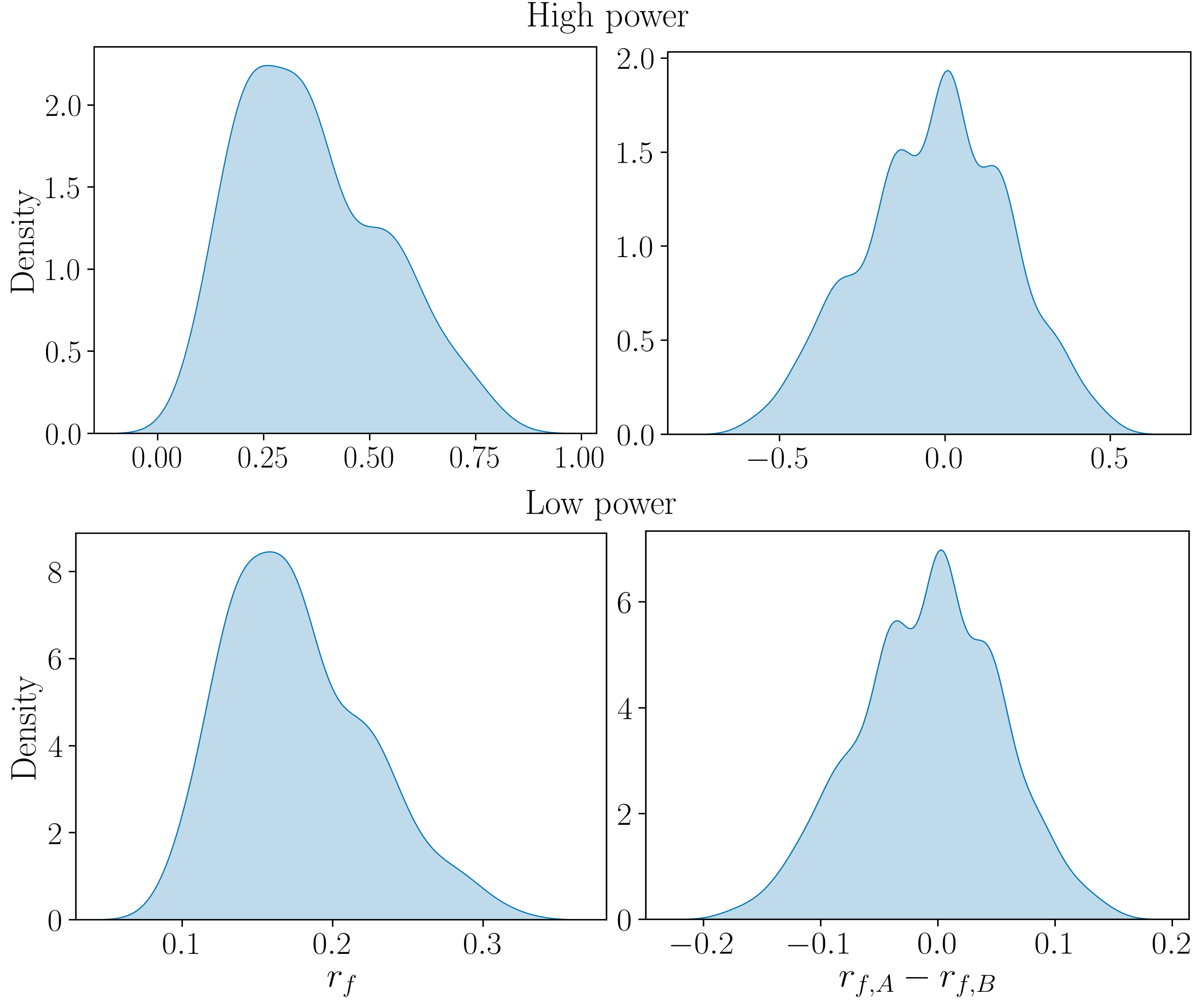}
  \caption{True $r_f$s and pairwise differences under $\text{H}_1$ simulations with high and low statistical power.}
  \label{true_sim}
\end{figure}

We simulate sequential AB tests of large multivariate experimental designs containing multiple context and content factors and interactions between them, resulting in a large number of $r_f$s being estimated at each update. Specifically, we include two context factors and two content factors, with four values each, resulting in 16 combinations of context values and 16 combinations of content values. Hence we estimate 256 ($= 16 \times 16 $) $\hat{r}_f$s at each sequential update. In addition, within each of the 16 combinations of context values, we compare $\hat{r}_f$s corresponding to each of ${16 \choose 2} = 120$ pairs of combinations of content values. In total, this results in $1920 = 120 * 16$ pairwise comparisons across all combinations of context values being conducted at each update.

When modelled with the hierarchical Bayesian approach, this experimental design is represented by 16 ($ = 4 * 4$) first-order $\beta$s that model the contribution of each value of each factor. In addition, we include 96 second-order $\beta$s that model the interactions between each pair of values of each pair of factors. Finally, we also include a single intercept $\beta$ representing the common mean of the contributions of all the $\beta$s. This results in a binary design matrix $X$ that has 256 rows and 113 ($ = 16 + 96 + 1$) columns, containing a 1 where the $\beta$ contributes to the $r_f$, and 0 otherwise.

To generate true values of $r_f$s under $\text{H}_1$, we set the first-order coefficients to zero, but randomly sample the second-order interaction coefficients for half of the $r_f$s from a normal distribution. We compare the hierarchical Bayesian and maximum likelihood estimation methods under two distinct scenarios:

\begin{itemize}
\item \emph{high statistical power} -- where the interaction coefficients are sampled from a normal distribution with mean and standard deviation 0.5, and we make a total of 100k treatment assignments per sequential update
\item \emph{low statistical power} -- where the interaction coefficients are sampled from a normal distribution with mean and standard deviation 0.2, and we make a total of 2.5k treatment assignments per sequential update
\end{itemize}

Fig. \ref{true_sim} depicts the distribution of the resulting $r_f$s and the true pairwise differences between them, in the above two scenarios. In each scenario, we also simulate $\text{H}_0$ by setting the interaction coefficients for the second half of the $r_f$s to zero, resulting in identical $r_f$s with no real differences between them.

We use these true $r_f$ values to generate binomially distributed treatment assignments and responses over 30 sequential updates. We distribute the overall number of treatment assignments per update (depending on the scenario) across the $\hat{r}_f$s to be estimated by our learning model, hence simulating an equal allocation sequential AB test. We conduct 80 random repetitions of the simulated AB test and present results that average across these repetitions. At each update, we record the 256 $\hat{r}_f$s estimated, as well as the 1920 Bayes factors $\mathrm{K}_{\hat{r}^h_{f,A},\hat{r}^h_{f,B}}$ from the pairwise comparison of $\hat{r}_f$s. We convert the Bayes factors to sequential p-values as described in ~\eqref{eq:seq_p_value} above.

As noted in sec. \ref{statistical_hypothesis_testing}, estimating Bayes factors and sequential p-values requires the specification of the $\tau$ parameter of the mSPRT. We simulated three possibilities for $\tau$: a fixed value of $0.1$, the dynamic setting proposed by Zhao et al. \cite{Zhao2018SafelyDesign}, as well as setting it to a value learnt from meta-analysis of previous simulations. We do this by constructing an effect-size meta-prior distribution as described earlier. We separate the 80 random repetitions into two equally-sized groups. The first group represents a training set, from which we learn $\tau$. The latter group is used for testing the learnt value.

Given this simulation setup, we measure the performance of the hierarchical Bayesian estimation relative to the maximum likelihood estimation using metrics of accuracy of estimation, as well as accuracy of hypothesis testing.

\subsubsection{Estimation accuracy}
We measure estimation accuracy of $\hat{r}^{h}_f$ and $\hat{r}^{l}_f$ using root mean squared error:

\begin{equation}
\text{RMSE} = \sqrt{(r_f - \hat{r}^k_{f})^2}, \quad k \in \{h, l\}.
\end{equation}

\subsubsection{Decision accuracy}
We define a conventional 5\% threshold on the level of significance required to declare a statistically significant difference. We declare a significant difference if the sequential p-value crosses this threshold, i.e., $p < 0.05$, at any update during a simulation repetition, and measure the sequential false negative rate under $\text{H}_1$, the sequential false positive rate under $\text{H}_0$, and the overall false discovery rate, averaging across repetitions.

\subsection{Real-world performance}

To complement the simulations, we also evaluated the impact of applying these methodologies to an anonymised, aggregated data set generated at our company. This data set comprised 220 experiments, each of which varied an aspect of user experience, aiming to optimise a rate metric estimated as a ratio of two other metrics. These two metrics were recorded at regular sequential updates and used as input to the learning model. Across experiments in the data set, there were up to 60 sequential updates, with an average of 17. Within each experiment, impressions were associated with context and content factors that defined limited, technical aspects of the environment in which it was displayed. The average experiment comprised 22 combinations of context and content factor values, and the largest 117.

As with the simulations, we use the hierarchical Bayesian framework to estimate the distributions of $r_f$s at each sequential update, for each combination of context and content factor value. We measure Bayes factors and sequential p-values for all pairs of $\hat{r}_f$s within each combination of contextual factor values in experiments in the test set. We then use the 5\% level of significance threshold on the sequential p-value to decide whether there are significant differences between the $\hat{r}_f$s.

When conducting hypothesis tests, we either use a fixed $\tau$ of 0.1, or learn it from the data set itself. To do so, we separate the experiments into two distinct groups: we take the first half of experiments as a training set to learn $\tau$, and test this learnt value on the latter half of experiments.

\section{Results}

\subsection{Estimation performance}

\begin{figure}[hbt!]
  \centering
  \includegraphics[width=\linewidth]{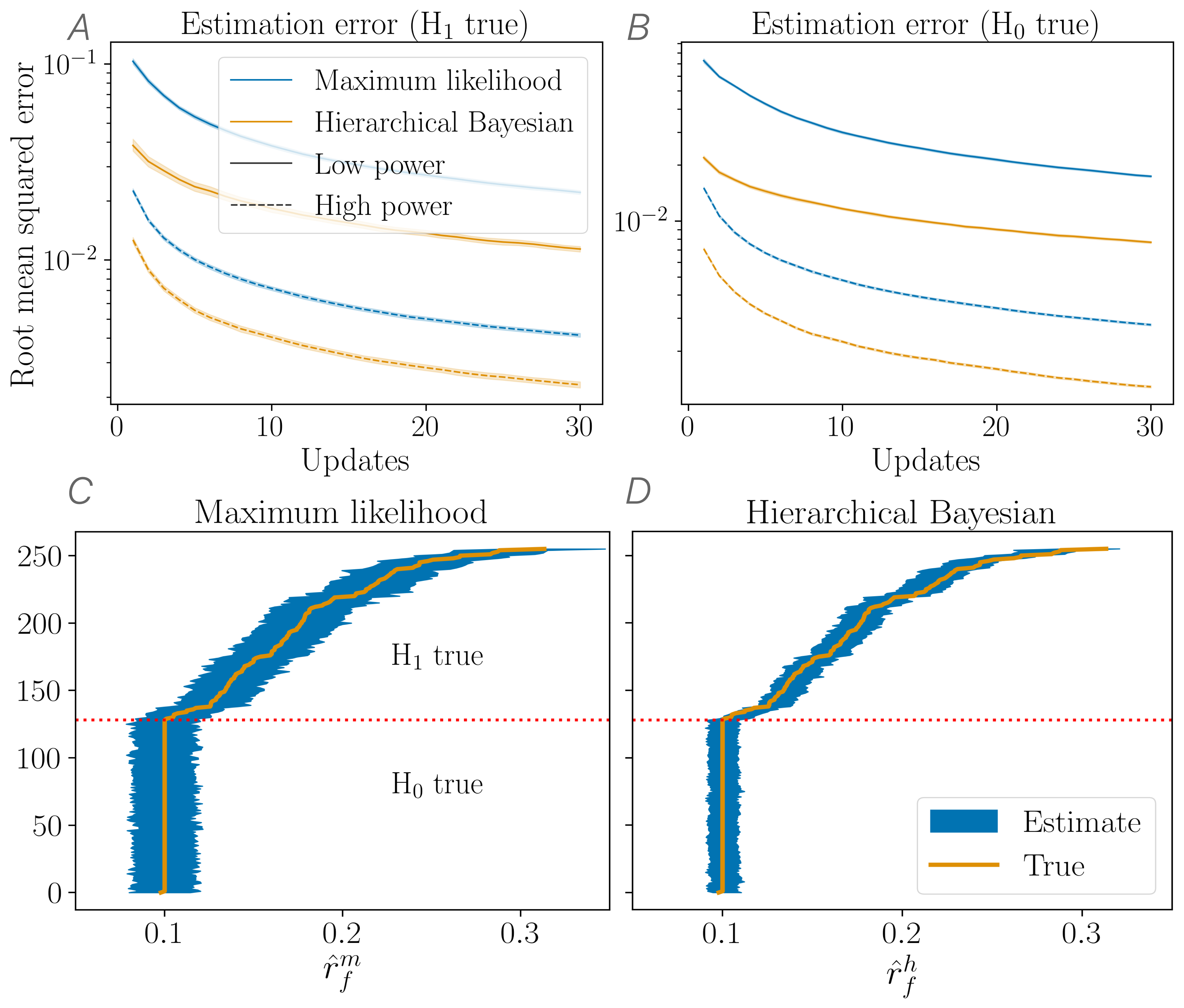}
  \caption{Hierarchical Bayesian vs. maximum likelihood
  estimation of $r_f$. Panels A and B plot estimation error. Panels C and D plot estimates at the last sequential update in the low power scenario, sorted by the true $r_f$, and averaged over repetitions.}
  \label{estimation_error}
\end{figure}

Figs. \ref{estimation_error}A and \ref{estimation_error}B compare the estimation error of maximum likelihood ($\hat{r}^{l}_f$) vs. hierarchical Bayesian ($\hat{r}^{h}_f$) estimation. In both effect size scenarios, the latter method produced a faster reduction in estimation error over sequential updates. This was true both when there were real differences between the true $r_f$s ($\text{H}_1$ true) and when the true $r_f$s were all identical ($\text{H}_0$ true).

Figs. \ref{estimation_error}C and \ref{estimation_error}D compare the variance of the estimators, demonstrating that the hierarchical Bayesian estimator has lower variance under both hypotheses and statistical power scenarios. This emerges due to the pooling of information across the levels in the hierarchical model. It is worth noting that this benefit grows with the size of the experiment, i.e., the greater the number of $r_f$s being estimated in a multivariate test, the greater the advantage of the hierarchical method.

\subsection{Sequential hypothesis testing}

\begin{figure}[hbt!]
  \centering
  \includegraphics[width=\linewidth]{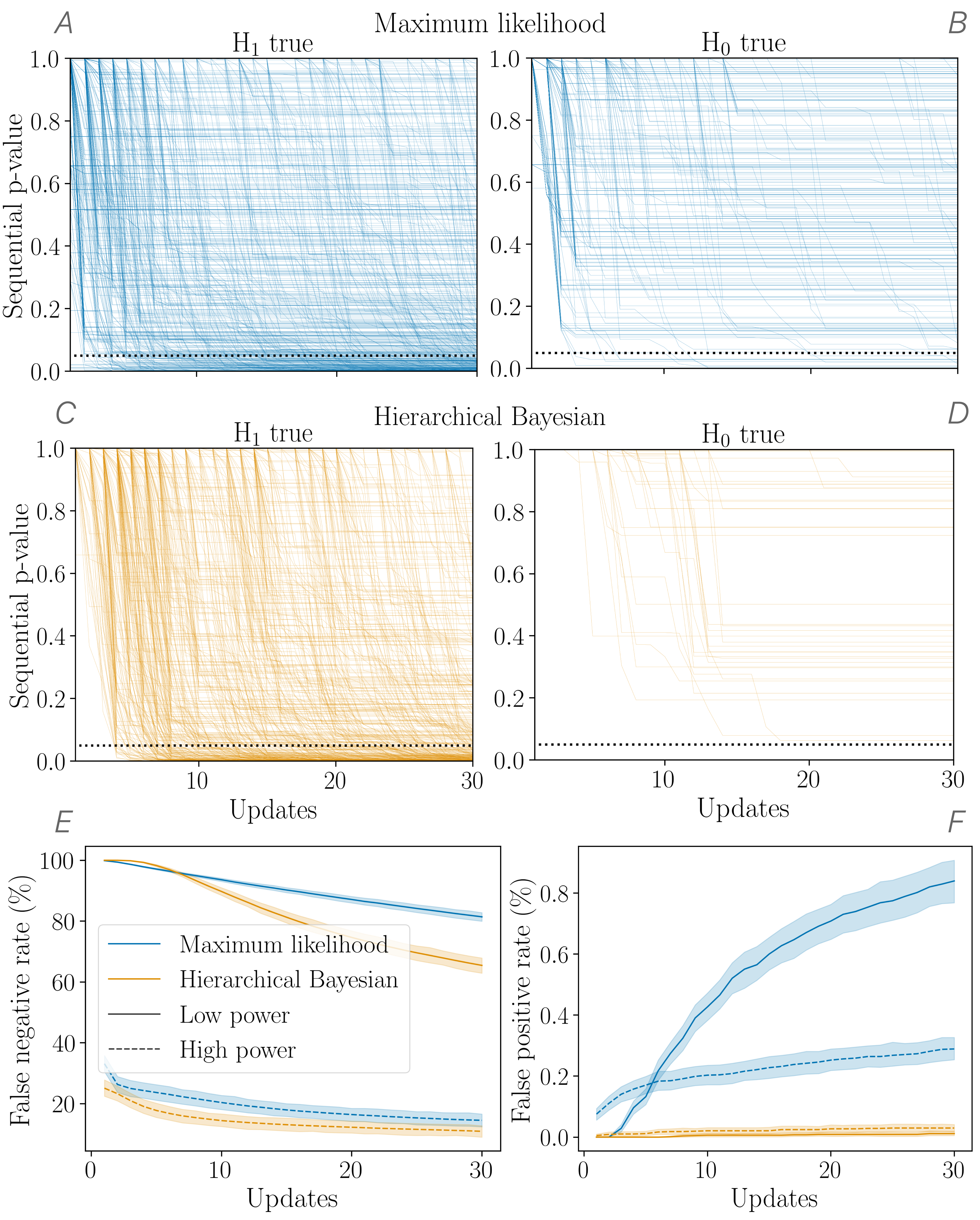}
  \caption{Panels A-D compare sequential p-values resulting from hierarchical Bayesian vs. maximum likelihood estimation, in one simulation repetition. Panels E-F plot overall false negative and positive rates across simulations in low and high power scenarios.}
  \label{statistical_confidence}
\end{figure}

We measured sequential p-values using a fixed $\tau$ of $0.1$. Figs. \ref{statistical_confidence}A-D  depict 1920 sequential p-value traces from one of the 80 repetitions, derived from Bayes factors comparing pairs of $\hat{r}_f$s in the low power scenario. As compared to the maximum likelihood method (figs. \ref{statistical_confidence}A-B), the hierarchical Bayesian method shows a visually stronger distinction in the sequential evolution of p-values between $\text{H}_0$ and $\text{H}_1$ (figs. \ref{statistical_confidence}C-D). In turn, this means that the hierarchical Bayesian approach produces a lower false negative rate under $\text{H}_1$ (fig. \ref{statistical_confidence}E) and a lower false positive rate under $\text{H}_0$ (fig. \ref{statistical_confidence}F). This improvement relative to maximum likelihood is evident under both low and high statistical power scenarios. However, it is worth noting that, under conditions of low power, the false negative rate initially takes a bit longer to reduce with the hierarchical Bayesian method, but then reduce more rapidly afterwards (see Fig. \ref{statistical_confidence}E). In practice, this confers a useful benefit for controlling error rates under conditions of low signal-to-noise.

Finally, we highlight that these sequential methods achieve a far lower false positive rate than a t-test, if it were to be used sequentially to assess statistical significance. More specifically, instead of using sequential Bayes factors, if we were to use a t-test to compare a pair of identical treatments (each with a 50\% conversion rate) at every sequential update, the false positive rate grows to ~28\% at a 5\% level of significance by the 30th update. The fact that classical fixed-horizon testing approaches like t-tests, if evaluated sequentially, lead to inflated false positive rates has also been highlighted by Johari et al. (see fig. 2 in \cite{Johari2017PeekingIt}).

\subsection{Meta effect-size prior learning}

\begin{figure}[hbt!]
  \centering
  \includegraphics[width=\linewidth]{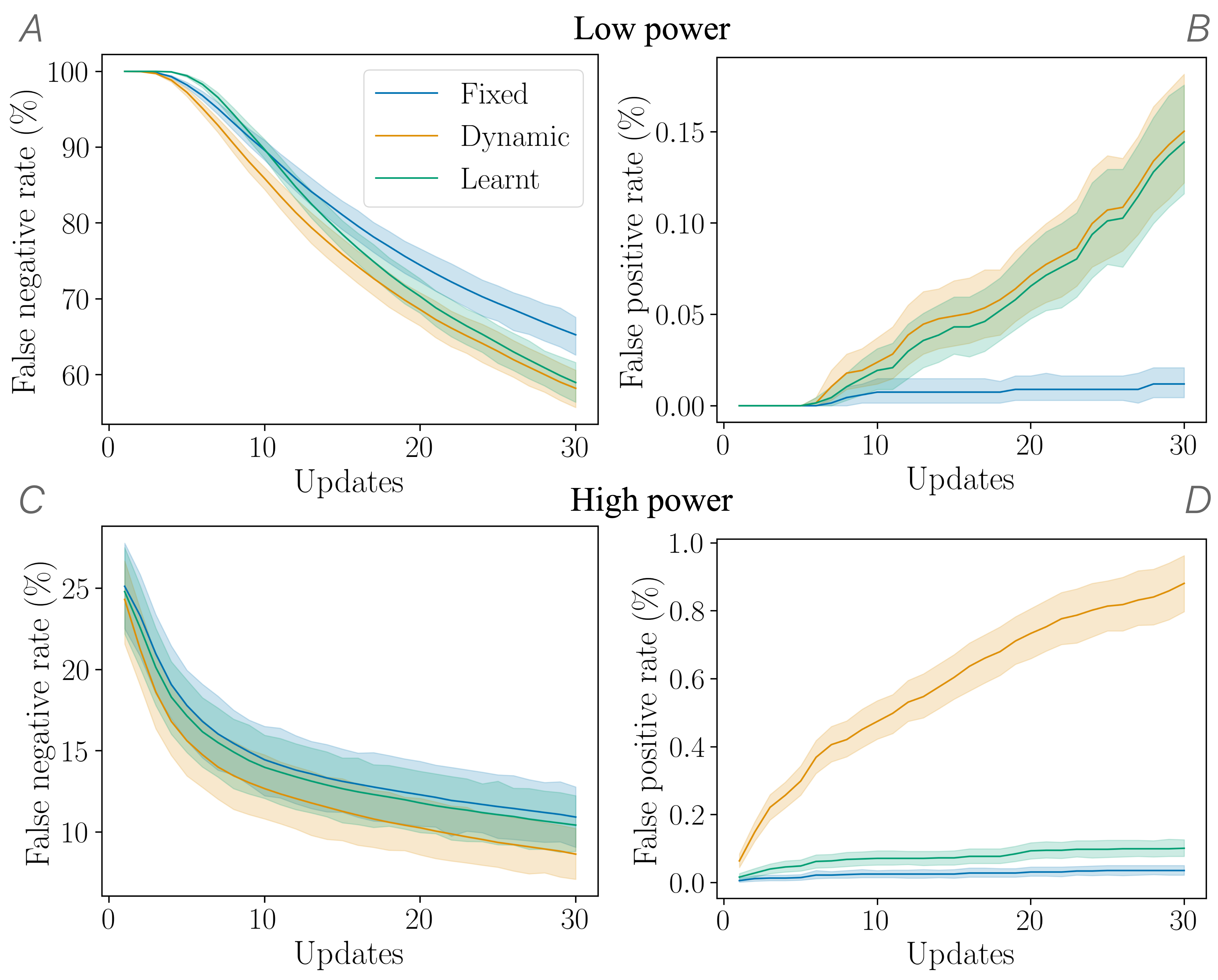}
  \caption{Panels A-D compare false negative and positive rates with the hierarchical Bayesian method, under different simulation scenarios and settings of the $\tau$ parameter.}
  \label{tau_comparison}
\end{figure}

Figs. \ref{tau_comparison}A-D show the evolution of false negative rates and false positive rates with the hierarchical Bayesian method, in the high power and low power scenarios, for the three different approaches to the specification of $\tau$ within the calculation of the Bayes factors. In the low power scenario, the dynamic and learnt specifications have similar false negative rates and false positives rates. Fixing $\tau$ to 0.1, however, shows a noticeably worse false negative rate as the number of updates increase. In the high power scenario, the salient difference instead occurs within dynamic specification of $\tau$. Here, the dynamic specification leads to an improved false negative rate at the cost of a magnitude higher false positive rate. In contrast, fixing $\tau$ leads to a reduced false positive rate, with slight cost to the false negative rate.

\subsection{Real-world results}

\begin{figure}[hbt!]
  \centering
  \includegraphics[width=\linewidth]{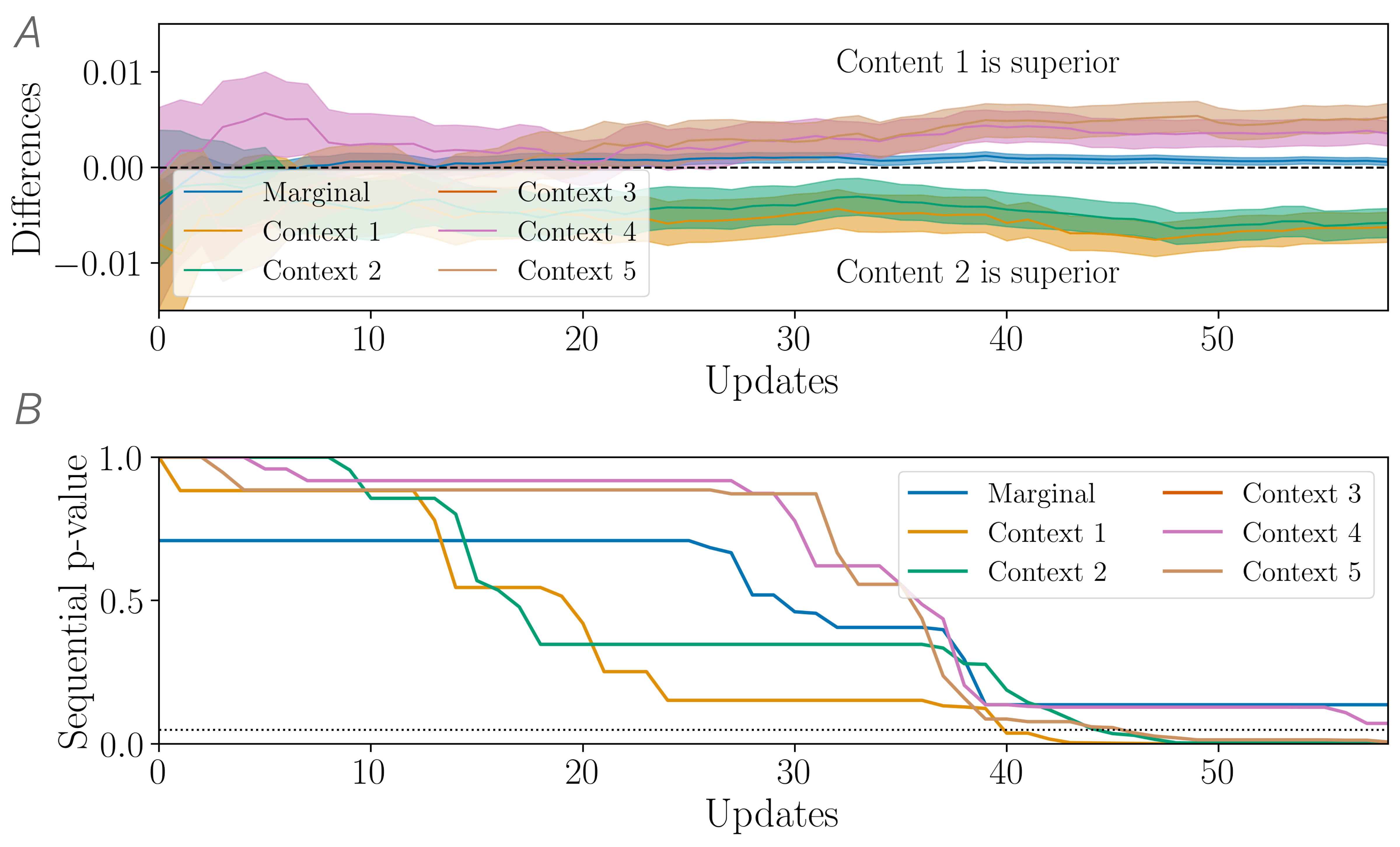}
  \caption{Pair-wise differences (panel A) and sequential p-values (panel B) in an example experiment from the real-world data set. Results are presented within each context, and after marginalising over contexts.}
  \label{real_world_example}
\end{figure}

Figs. \ref{real_world_example}A and \ref{real_world_example}B show results from applying the methodology we have developed to an example experiment in the real-world data set. As with the simulations, $\tau$ was set to $0.1$. In this example experiment, there were five possible values of one contextual factor and two values of one content factor, with $\hat{r}_f$s estimated over 60 sequential updates. Fig. \ref{real_world_example}A shows the evolution of pair-wise differences within each context for this example; for two context values one of the content values appears superior, whereas for the other three the relationship is swapped. Alongside this, the figure shows the pair-wise differences at the global level, i.e., as computed using marginal probability estimation (see Sec. \ref{marginal_probability_estimation}). This marginal difference between the two content values sits in the middle of the five context values. As in fig. \ref{real_world_example}B, the weight of evidence, as quantified by the sequential p-values, suggests that we can trust these conclusions for four of the context factors where the p-values dropped below 0.05. There is insufficient data for the global result to yield a significant p-value. This result demonstrates the capacity of the overall Bayesian framework to distinguish valuable insights within context factors, even when the global behaviour is more muddied.

Across the data set of real-world experiments, we examined the sequential p-values generated by comparing every combination of content values for each context value, using both the maximum likelihood and hierarchical Bayesian methods. These p-values are plotted as a function of the effect size measured by maximum likelihood estimation, in Fig. \ref{real_world_results}A. The figure demonstrates that sequential p-values generated with the hierarchical Bayesian method show a stronger sensitivity to the measured effect size, as evidenced by the faster reduction with increasing effect size. This pattern mirrors the stronger distinction between hierarchical Bayesian p-values observed in $\text{H}_1$ true vs. $\text{H}_0$ true simulations (compare Figs. \ref{statistical_confidence}C and \ref{statistical_confidence}D), when compared to maximum likelihood estimation (compare Figs. \ref{statistical_confidence}A and \ref{statistical_confidence}B). In the real world setting, this feature of the hierarchical Bayesian method reduces false positives under conditions of high noise and weak effects, without the need for post hoc multiple comparisons.

\subsubsection{Meta effect-size prior learning}
Finally, we also applied the meta-prior learning described in Sec. \ref{effect_size_prior_learning} and Fig. \ref{tau_comparison} to the real-world data set, by randomly splitting the data set into equal halves, learning the $\tau$ parameter from the first half, and using this learnt value to estimate sequential p-values in the second half. Fig. \ref{real_world_results}B show the number of experiments with statistically significant detected effects in the second half, for both the fixed and learnt estimates of $\tau$. The benefit of meta-prior learning visualised in Fig. \ref{tau_comparison} is again highlighted here: the proportion of experiments with a detected effect increases by 32\%. In conjunction with the evidence from the numerical simulations, we can posit that the majority of these additional effects detected using a learnt $\tau$ would have otherwise been missed. It's also clear that a dynamic value of $\tau$ \cite{Zhao2018SafelyDesign} achieves much higher detection rates, incurring a higher risk of false positives. This too aligns with findings from simulated data in Fig. \ref{tau_comparison}. In Sec. \ref{Discussion}, we discuss the relative pros and cons of learning vs. setting $\tau$ dynamically in practice.

\begin{figure}[hbt!]
  \centering
  \includegraphics[width=\linewidth]{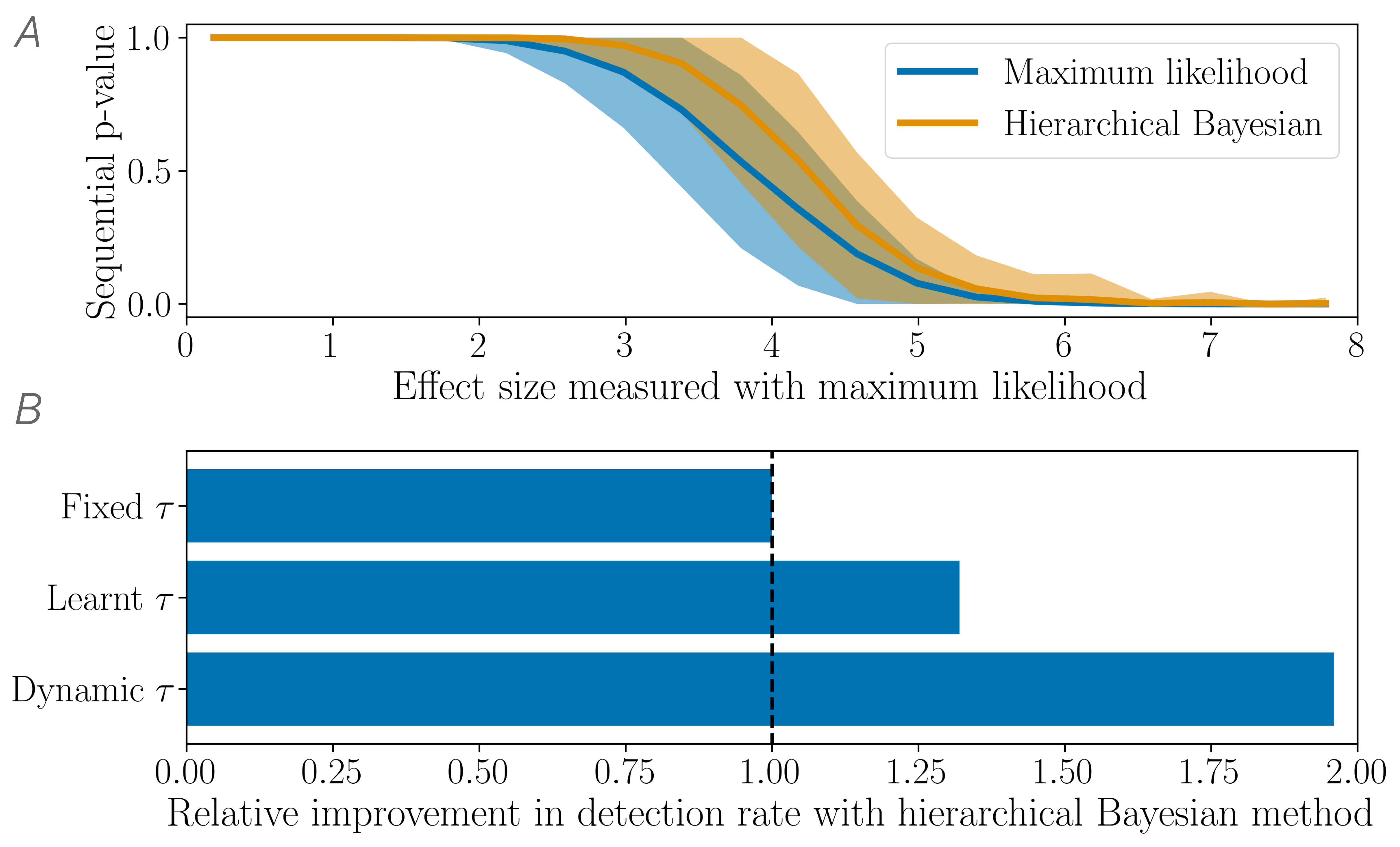}
  \caption{Panel A plots sequential p-values estimated by the hierarchical Bayesian and MLE methods, averaged over all experiments in the real-world data set, as a function of MLE effect size. Panel B plots detection performance with fixed $\tau$ vs. that learnt by building meta-priors.}
  \label{real_world_results}
\end{figure}

\section{Discussion}
\label{Discussion}

The hierarchical Bayesian approach we have described here applies well-established ideas in Bayesian inference to multivariate AB testing. We have combined a theoretical framework, numerical simulations and real-world tests to demonstrate its value for large-scale experimentation in the digital services industry. The underpinning learning methodology is now increasingly applicable, due to the availability of relatively inexpensive computational power combined with efficient implementations.

Comparing this approach with maximum likelihood estimation illustrates why it is useful, and when. As demonstrated, there are scenarios in which it yields a more favourable trade-off between false positive and negative rates It does so without the need for post hoc correction for multiple comparisons, which can be prohibitively expensive in large-scale experiments. From practical experience, we observe that our method is particularly beneficial in cases where partial pooling can learn from traffic that is relatively evenly distributed among all cells within the experiment.

But the benefits to hierarchical Bayesian inference extend beyond this quantitative trade-off. One of the salient features of our hierarchical Bayesian approach is that it regularises reward estimates and effect sizes through prior pooling. The consequence is that, for experiments with low data volumes, all treatments will have more similar reward estimates, closer to the common mean of all the estimates. This yields a practical benefit in real-world systems in which clients use both reported effect sizes and p-values to make decisions. Hence, alongside the reduction in ``statistical'' false positives quantified by p-values, pooling also prevents ``human judgement'' false positives. Experimenters, particularly non-experts, are prone to making spurious conclusions when they observe larger differences between treatments, even if unsupported by statistical inference. By incorporating pooling, hierarchical Bayesian inference reduces the occurrence of spuriously large differences that likely arise from sampling noise in low signal-to-noise conditions. In summary, hierarchical Bayesian inference reduces the composite statistical and human false positive rate in real-world applications.

A second salient feature of hierarchical Bayesian inference is extensibility. It is a general learning framework that can be easily applied to different use cases. In our exposition, we have only modelled a single dependent variable. But the same modelling approach can be easily extended to  multiple dependent variables, or to a more complex model of the relationship between contexts, contents and other, potentially continuous co-variates. Further, other popular statistical techniques can easily be applied in conjunction with the approach: as we and other researchers have shown, hierarchical Bayesian inference can be used to learn effect-size hyperparameters from historical experiments to accelerate future experiments \cite{Deng2013ImprovingData}, and can be combined with automatic optimisation of treatment allocation with multi-armed bandits \cite{Hill2017AnOptimization, Nabi2022BayesianBayes}.

We have shown that learning the $\tau$ hyperparameter yields more benefit than a fixed specification of the parameter. This is true even if only a small number of historical experiments are available, matching the intuition that this kind of parameter value need only be fit with reasonable accuracy \cite{Johari2015AlwaysTesting}. But why would one choose to take this approach when the dynamic setting \cite{Zhao2018SafelyDesign} can yield greater statistical power, with little cost to the false positive rate? Here too, human factors are worth considering. By learning the underlying distribution of effect sizes, we can inform experimenters about the potential value and impact of future experiments, and help educate them as to when it is and is not beneficial to run an experiment. Conditioning such learning on experiment-level features could provide additional value by guiding future experiments to target high-impact interventions.

\bibliographystyle{IEEEtran}
\bibliography{references}

\newpage

\appendix
\section{Appendix} \label{app:A}


\subsection{A simplified hierarchical model} \label{subsec:model}
Recall from Sec. \ref{subsec:multivariate_ab} that the number of unique content and context factors is $M$ and $C$, respectively. We slightly deviate from the notation of the main part of the article and let $m_i$, $i=1,\dots, M$, denote the unique content factors and let $c_j$, $j=1,\dots, C$, be the unique context factors. We have at most $N$ values for each $m_i$ and $c_j$, respectively. Hence, for $F=M+C$ the total number $N_F$ of unique content-context combinations is $O(N^F)$. In addition, let $f=1,\dots,N_F$ denote an enumeration of all unique content-context combinations.

For simplicity we also assume that the observed data consists of a stream of response rates $r_f^{(i)} \in [0,1]$ with $i=1,\dots,n_f$, per content-context combination $f$. We consider these response rates after applying the inverse sigmoid function $g^{-1}$, i.e., we have a data stream of the form 
\begin{equation} \label{eq:data1}
    y_f^{(i)} = g^{-1}\left(r_f^{(i)}\right), \,\,\,f=1,\dots, N_F, \,\,\,i=1,\dots,n_f.
\end{equation}
Following the exposition in \cite[Chapter 5]{gelman2004bayesian}, the random variables $y_f^{(i)}$ are assumed to be independent and normally distributed according to
\begin{equation}\label{eq:likelihood}
    y_f^{(i)}  \sim \mathrm{Normal}\left((X\beta)_f, \sigma_f^2\right),\,\,\text{for all}\,\, f=1,\dots,N_F,
\end{equation}
where the design matrix $X$ is as in \eqref{eq:X} above. We denote by $(X\beta)_f$ the element with index $f$ of the vector $X\beta$. In order to further simplify the analysis in this section we assume that $X$ is the identity matrix of shape $N_F\times N_F$. This implies that the coefficient vector $\beta$ is of the form
\begin{equation} \nonumber
    \beta = (\beta_1,\dots,\beta_{N_F})^T,
\end{equation}
where $v^T$ denotes the transpose of a vector $v$.
The coefficients $\beta_f$, $f=1,\dots,N_F$, are unknown, while the positive standard deviations $\sigma_f$ are assumed to be known. We consider a Bayesian model for the $\beta_f$ similar to Sec. \ref{subsec:HB_inference} above. Specifically, the $\beta_f$ follow
\begin{equation} \label{eq:beta}
\beta_f|\mu \sim \mathrm{Normal}(\mu, \sigma_{\beta}^2),
\end{equation}
where $\sigma_{\beta}>0$ is known and the $\beta_f$ are conditionally independent given $\mu$. In addition, $\mu$ in  \eqref{eq:beta} is normally distributed with
\begin{equation} \label{eq:mu}
\mu\sim \mathrm{Normal}(0, \sigma_{\mu}^2),
\end{equation}
where $\sigma_{\mu}$ is the known positive standard deviation.

Note that \eqref{eq:likelihood} implies that the empirical means $\bar{y}_f^{(\cdot)}:=\frac{1}{n_f}\sum_{i=1}^{n_f} y_f^{(i)}$ are independent and distributed according to
\begin{equation} \label{eq:data3}
    \bar{y}_f^{(\cdot)} \sim \mathrm{Normal}\left(\beta_f, s_f^2\right),\,\,\,\text{with}\,\,\,s_f^2 := \frac{\sigma_f^2}{n_f}.
\end{equation}

\subsection{Explicit hierarchical Bayesian inference}
The following proposition derives the posterior distribution of the parameters $\beta_f$ in closed form. 
For this let $\mathcal{D}:=\left\{\bar{y}_f^{(\cdot)}:\, f=1,\dots,N_F\right\}$ for the sufficient statistics $\bar{y}_f^{(\cdot)}$, $f=1,\dots,N_F$.

\begin{proposition} \label{thm:1}
Given the model in eqns. \eqref{eq:likelihood} -- \eqref{eq:data3}, the posterior of $\beta_f$ is of the form
\begin{equation} \label{eq:post_beta}
    p\left(\beta_f \big| \mathcal{D}\right) = \mathrm{Normal}\left(\hat{\beta}_f, \hat{\sigma}_f^2\right),
\end{equation}
where the mean $\hat{\beta}_f$ is 
\begin{align} \label{eq:beta_hat}
    \hat{\beta}_f &= \frac{\bar{y}_f^{(\cdot)}}{1 + \frac{s_f^2}{\sigma_{\beta}^2}} \nonumber \\
    &+ \frac{1}{1 + \frac{\sigma_{\beta}^2}{s_f^2}} \left( \sum_{j=1}^{N_F} \frac{\big(\sigma_{\beta}^2 + s_j^2\big)^{-1}}{\sigma_{\mu}^{-2} + \sum_{k=1}^{N_F} \big(\sigma_{\beta}^2 + s_k^2\big)^{-1}} \bar{y}_j^{(\cdot)}\right),
\end{align}
and the variance $\hat{\sigma}_f^2$ is of the form
\begin{equation} \label{eq:post_var_beta}\nonumber
    \hat{\sigma}_f^2 = \frac{1}{\frac{1}{\sigma_{\beta}^2} + \frac{1}{s_f^2}} + \left(1 + \frac{\sigma_{\beta}^2}{s_f^2} \right)^{-2} \frac{1}{\sigma_{\mu}^{-2}+\sum_{k=1}^{N_F} \big(s_k^2 + \sigma_{\beta}^2\big)^{-1}}.
\end{equation}
\end{proposition}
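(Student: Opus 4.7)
The plan is to exploit the tree structure of the hierarchy by decomposing
\begin{equation} \nonumber
p(\beta_f \mid \mathcal{D}) \;=\; \int p(\beta_f \mid \mu, \mathcal{D})\, p(\mu \mid \mathcal{D})\, d\mu,
\end{equation}
and then to show that both factors inside the integrand are Gaussian with means linear in $\mu$, so that the integral yields another Gaussian whose mean and variance follow from the tower property $E[\beta_f \mid \mathcal{D}] = E[E[\beta_f \mid \mu, \mathcal{D}] \mid \mathcal{D}]$ and the law of total variance. This avoids directly inverting an $N_F \times N_F$ covariance matrix over the joint $(\mu, \beta_1, \dots, \beta_{N_F})$.

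First I would derive $p(\mu \mid \mathcal{D})$. Marginalising $\beta_f$ out of \eqref{eq:data3} and \eqref{eq:beta} and using the Gaussian convolution identity gives $\bar{y}_f^{(\cdot)} \mid \mu \sim \mathrm{Normal}(\mu,\, \sigma_\beta^2 + s_f^2)$ independently across $f$. Combining this likelihood with the prior \eqref{eq:mu} and completing the square in $\mu$ yields $\mu \mid \mathcal{D} \sim \mathrm{Normal}(\hat{\mu}, V_\mu)$ with
\begin{equation} \nonumber
V_\mu = \Bigl(\sigma_\mu^{-2} + \textstyle\sum_k (\sigma_\beta^2 + s_k^2)^{-1}\Bigr)^{-1}, \quad \hat{\mu} = V_\mu \sum_j \frac{\bar{y}_j^{(\cdot)}}{\sigma_\beta^2 + s_j^2}.
\end{equation}
Notice that $\hat{\mu}$ is precisely the weighted sum appearing in the second term of \eqref{eq:beta_hat}, and $V_\mu$ is the second factor in \eqref{eq:post_var_beta}.

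Next I would compute $p(\beta_f \mid \mu, \mathcal{D})$. Because the $\beta_f$'s are conditionally independent given $\mu$ and $\bar{y}_f^{(\cdot)}$ is informative only about $\beta_f$, this reduces to the elementary normal--normal conjugacy of prior $\beta_f \mid \mu \sim \mathrm{Normal}(\mu, \sigma_\beta^2)$ with likelihood $\bar{y}_f^{(\cdot)} \mid \beta_f \sim \mathrm{Normal}(\beta_f, s_f^2)$, giving a Gaussian with mean $\bar{y}_f^{(\cdot)}/(1 + s_f^2/\sigma_\beta^2) + \mu/(1 + \sigma_\beta^2/s_f^2)$ and variance $(1/\sigma_\beta^2 + 1/s_f^2)^{-1}$, which matches the first terms of \eqref{eq:beta_hat} and \eqref{eq:post_var_beta} respectively.

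Finally I would assemble the two pieces. Since $E[\beta_f \mid \mu, \mathcal{D}]$ is affine in $\mu$ with slope $(1 + \sigma_\beta^2/s_f^2)^{-1}$, taking expectations with respect to $p(\mu \mid \mathcal{D})$ replaces $\mu$ by $\hat{\mu}$ in the conditional mean, reproducing \eqref{eq:beta_hat}. The law of total variance gives $\hat{\sigma}_f^2 = E[\mathrm{Var}(\beta_f \mid \mu, \mathcal{D})] + \mathrm{Var}(E[\beta_f \mid \mu, \mathcal{D}])$; the first term is the constant $(1/\sigma_\beta^2 + 1/s_f^2)^{-1}$, and the second is $(1 + \sigma_\beta^2/s_f^2)^{-2} V_\mu$, recovering \eqref{eq:post_var_beta}. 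Gaussianity of the result follows since $\beta_f \mid \mathcal{D}$ is a scale mixture (in fact an affine-Gaussian mixture) of Gaussians with Gaussian mixing measure, which is itself Gaussian. The main obstacle is the bookkeeping rather than any conceptual difficulty: keeping the two shrinkage factors $1/(1 + s_f^2/\sigma_\beta^2)$ and $1/(1 + \sigma_\beta^2/s_f^2)$ distinct, and verifying that the weights inside $\hat{\mu}$ collapse into the normalised form displayed in \eqref{eq:beta_hat}.
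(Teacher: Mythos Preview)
Your proposal is correct and follows essentially the same approach as the paper: both decompose $p(\beta_f \mid \mathcal{D}) = \int p(\beta_f \mid \mu, \mathcal{D})\, p(\mu \mid \mathcal{D})\, d\mu$, derive the two Gaussian factors via normal--normal conjugacy, and then integrate out $\mu$. Your use of the tower property and law of total variance is simply a slightly more explicit rendering of what the paper calls ``standard conjugacy arguments'' for evaluating the Gaussian integral.
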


\begin{proof}
In order to show that the posterior of $\beta_f$ is as in \eqref{eq:post_beta} we need to carry out an explicit Bayesian analysis for the hierarchical model defined by \eqref{eq:likelihood} -- \eqref{eq:data3}. For this we have from \cite[p.~116]{gelman2004bayesian}
\begin{equation} \nonumber
    p\left(\beta_f \big| \mu, \mathcal{D}\right) = \mathrm{Normal}\left(\tilde{\beta}_f, \tilde{\sigma}_f^2 \right), \,\,\,
\end{equation}
with
\begin{equation} \nonumber
    \tilde{\beta}_f = \frac{s_f^{-2}\bar{y}_f^{(\cdot)}  + \sigma_{\beta}^{-2} \mu}{\sigma_{\beta}^{-2} + s_f^{-2}},
\end{equation}
and
\begin{equation} \nonumber
    \tilde{\sigma}_f^2 = \frac{1}{\sigma_{\beta}^{-2} + s_f^{-2}}.
\end{equation}
Furthermore, we can derive the posterior of $\mu$ as
\begin{equation} \nonumber
    p\left(\mu | \mathcal{D}\right) = \mathrm{Normal}(\tilde{\mu}, \tilde{\sigma}^2),
\end{equation}
where
\begin{equation} \nonumber
    \tilde{\mu} = \sum_{j=1}^{N_F}\frac{\big(s_j^2 + \sigma_{\beta}^2\big)^{-1}}{\sigma_{\mu}^{-2}+ \sum_{k=1}^{N_F} \big(s_k^2 + \sigma_{\beta}^2\big)^{-1}}\bar{y}_{j}^{(\cdot)},
\end{equation}
and
\begin{equation} \nonumber
    \tilde{\sigma}^2 = \frac{1}{\sigma_{\mu}^{-2} + \sum_{k=1}^{N_F} \big(s_k^2+\sigma_{\beta}^2\big)^{-1}}.
\end{equation}
Hence, the posterior of $\beta_f$ can be computed by marginalising the effect of $\mu$, i.e.,
\begin{equation}\label{eq:proof_1_post_int}
    p\left(\beta_f \big| \mathcal{D}\right) = \int_{-\infty}^{\infty} p\left(\beta_f \big| \mu, \mathcal{D}\right)p\left(\mu | \mathcal{D}\right) \mathrm{d} \mu. 
\end{equation}
As both distributions under the integral in \eqref{eq:proof_1_post_int} are Gaussian, we can use standard conjugacy arguments to obtain the result.

\end{proof}

\begin{remark}
The expression for the posterior mean $\hat{\beta}_f$ in \eqref{eq:beta_hat} can be viewed as trading off a single empirical mean $\bar{y}_f^{(\cdot)}$ with a weighted average of the empirical means over all content-context combinations. Balancing these two terms is achieved by comparing our prior uncertainty in $\beta_f$ expressed by $\sigma_{\beta}^2$ in \eqref{eq:beta} with our uncertainty $s_f^2$ in the data for content-context combination $f$. 
\end{remark}

\subsection{Bias and variance analysis}
The mean of the posterior $\hat{\beta}_f$ given in \eqref{eq:beta_hat} can be interpreted as an estimator for the unknown $\beta_f$. In this case the data $\bar{y}_f^{(\cdot)}$, $f=1,\dots,N_F$, on the right-hand side (RHS) of \eqref{eq:beta_hat} are viewed as random variables distributed according to \eqref{eq:data3}.
In the following theorem we provide an expression for the mean and an upper bound for the variance of $\hat{\beta}_f$. Note that in this subsection mean and variance are computed w.r.t. the distribution of $\bar{y}_f^{(\cdot)}$, $f=1,\dots,N_F$, and under the assumption that the true $\beta_f$ are fixed, but unknown.

\begin{proposition} \label{thm:2}
The mean of $\hat{\beta}_f$ given in \eqref{eq:beta_hat} is of the form
\begin{align} \label{eq:expected_beta} 
    \mathbb{E}\left[\hat{\beta}_f\right] &= \frac{\beta_f}{1 + \frac{s_f^2}{\sigma_{\beta}^2}} \nonumber \\
    &+ \frac{1}{1 + \frac{\sigma_{\beta}^2}{s_f^2}} \left( \sum_{j=1}^{N_F} \frac{\big(\sigma_{\beta}^2 + s_j^2\big)^{-1}}{\sigma_{\mu}^{-2} + \sum_{k=1}^{N_F} \big( \sigma_{\beta}^2 + s_k^2\big)^{-1}} \beta_j\right),
\end{align}
and the variance of $\hat{\beta}_f$ admits the following upper bound
\begin{align} \label{ineq:var_beta_hat}
    &\mathrm{Var}\left(\hat{\beta}_f\right) \leq \frac{s_f^2}{\left(1 + \frac{s_f^2}{\sigma_{\beta}^2}\right)^2} + \frac{2 \sigma_{\beta}^2}{\left(1 + \frac{\sigma_{\beta}^2}{s_f^2}\right)^2} \nonumber \\
    &+ \frac{1}{\left(1 + \frac{\sigma_{\beta}^2}{s_f^2}\right)^2}\sum_{j=1}^{N_F} \left(\frac{\big(\sigma_{\beta}^2 + s_j^2\big)^{-1}}{\sigma_{\mu}^{-2} + \sum_{k=1}^{N_F} \big(\sigma_{\beta}^2 + s_k^2\big)^{-1}}\right)^2 s_j^2.
\end{align}
\end{proposition}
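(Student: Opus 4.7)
The plan is to treat \eqref{eq:beta_hat} as a fixed linear function of the independent random variables $\bar{y}_j^{(\cdot)}$, $j=1,\dots,N_F$, which satisfy $\mathbb{E}[\bar{y}_j^{(\cdot)}] = \beta_j$ and $\mathrm{Var}(\bar{y}_j^{(\cdot)}) = s_j^2$ by \eqref{eq:data3}. The mean formula \eqref{eq:expected_beta} then follows directly from linearity of expectation: every $\bar{y}_f^{(\cdot)}$ in \eqref{eq:beta_hat} is replaced by $\beta_f$ and every $\bar{y}_j^{(\cdot)}$ inside the weighted average by $\beta_j$, while the deterministic coefficients (which depend only on $\sigma_\beta$, $\sigma_\mu$ and the $s_k$) are preserved.

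For the variance bound I would decompose $\hat{\beta}_f = A + B$ with $A = c_f\,\bar{y}_f^{(\cdot)}$ and $B = \alpha \sum_{j=1}^{N_F} w_j\,\bar{y}_j^{(\cdot)}$, where $c_f = (1 + s_f^2/\sigma_\beta^2)^{-1}$, $\alpha = (1 + \sigma_\beta^2/s_f^2)^{-1}$, and $w_j$ is the normalised weight appearing in \eqref{eq:beta_hat}. Independence of the $\bar{y}_j^{(\cdot)}$ then gives $\mathrm{Var}(A) = c_f^2 s_f^2$ and $\mathrm{Var}(B) = \alpha^2 \sum_{j=1}^{N_F} w_j^2 s_j^2$, matching the first and third contributions on the right-hand side of \eqref{ineq:var_beta_hat}. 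The only non-trivial piece is the cross-term $\mathrm{Cov}(A,B) = c_f\,\alpha\,w_f\,s_f^2$, which is non-zero because $\bar{y}_f^{(\cdot)}$ appears both in $A$ and in the $j=f$ summand of $B$. A short algebraic simplification yields $c_f\,\alpha = \sigma_\beta^2 s_f^2/(\sigma_\beta^2 + s_f^2)^2$, and hence $2\,\mathrm{Cov}(A,B) = 2\sigma_\beta^2 w_f / (1 + \sigma_\beta^2/s_f^2)^2$. Applying the bound $w_f \leq 1$ — which is immediate because the denominator of $w_f$ contains the strictly positive extra term $\sigma_\mu^{-2}$ on top of the full sum $\sum_k (\sigma_\beta^2 + s_k^2)^{-1}$ that already dominates the numerator — produces exactly the middle term of \eqref{ineq:var_beta_hat}.

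The main obstacle is essentially bookkeeping: one has to notice that $\bar{y}_f^{(\cdot)}$ appears \emph{twice} in $\hat{\beta}_f$, so that naively summing $\mathrm{Var}(A)$ and $\mathrm{Var}(B)$ would understate the variance, and then to see that the resulting covariance contribution is precisely what the middle term in \eqref{ineq:var_beta_hat} is engineered to absorb via the crude but convenient bound $w_f \leq 1$. Everything else reduces to algebraic simplification of the repeatedly occurring prefactors $\sigma_\beta^2/(\sigma_\beta^2 + s_f^2)$ and $s_f^2/(\sigma_\beta^2 + s_f^2)$.
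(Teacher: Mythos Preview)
Your proposal is correct and follows essentially the same route as the paper's own proof: linearity of expectation for \eqref{eq:expected_beta}, and for \eqref{ineq:var_beta_hat} the observation that $\hat{\beta}_f$ is linear in the independent $\bar{y}_j^{(\cdot)}$ with $\bar{y}_f^{(\cdot)}$ appearing twice, followed by the bound $w_f\leq 1$ on the resulting cross contribution. The paper phrases the variance step as ``grouping all terms in \eqref{eq:beta_hat} by the respective $f$'' before taking variances, whereas you keep the $A+B$ split and handle the overlap via $2\,\mathrm{Cov}(A,B)$; these are algebraically identical and both reduce to bounding the same term $2c_f\alpha w_f s_f^2$ by $2\sigma_\beta^2/(1+\sigma_\beta^2/s_f^2)^2$.
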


\begin{proof}
In order to establish \eqref{eq:expected_beta} recall that the empirical means $\bar{y}_f^{(\cdot)}$ are distributed as in \eqref{eq:data3}. Hence, taking expectations on both sides of \eqref{eq:beta_hat} yields the result.

For bounding the variance of $\hat{\beta}_f$ we use the assumption that the $\bar{y}_f^{(\cdot)}$ are independent. Hence, inequality \eqref{ineq:var_beta_hat} follows after grouping all terms in \eqref{eq:beta_hat} by the respective $f$, followed by taking the variance on both sides.
\end{proof}

\begin{remark} \label{remark:beta_is_biased}
Proposition \ref{thm:2} provides us with a closed form expression for the expected value of $\hat{\beta}_f$. Note that $\hat{\beta}_f$ is not necessarily an unbiased estimator of $\beta_f$. 
\end{remark}

Recall that the maximum likelihood estimator for the content-context combination $f$ within the above model is given by the empirical mean $\bar{y}_f^{(\cdot)}$, which is an unbiased estimator for the true mean $\beta_f$ and has variance $s_f^2$ as defined in \eqref{eq:data3}. The following theorem provides sufficient conditions under which the Bayesian estimator $\hat{\beta}_f$ achieves lower variance than the corresponding maximum likelihood estimator for a content-context combination $f$.

\begin{theorem}\label{thm:3}
Let $\sigma_{\beta}^2$, $\sigma_{\mu}^2$ and $s_f^2$, $f=1,\dots,N_F$, be defined as in \eqref{eq:beta}, \eqref{eq:mu} and \eqref{eq:data3}, respectively. Furthermore, set $h := \frac{s_f^2}{\sigma_{\beta}^2}$ and let $c>0$ be such that $\frac{\sigma_{\mu}^2}{\sigma_{\beta}^2} \leq c$.
Then, the following two statements hold.
\begin{enumerate}
    \item If 
    \begin{equation} \label{thm1:cond_1}
    \max_{j=1,\dots, N_F;\,\, j\neq f} \frac{s_j^2}{\sigma_{\beta}^2} \leq \frac{1}{h},
    \end{equation}
    then
    \begin{equation} \label{ineq:thm1_main_1}
        \mathrm{Var}\left(\hat{\beta}_f\right) \leq c_1(h) s_f^2,
    \end{equation}
    where
    \begin{align} 
        c_1(h) &= \frac{1}{(1+h)^2} + \frac{2}{h\left(1 + \frac{1}{h}\right)^2} \nonumber \\
        &+ \frac{c^2}{\left( 1 + h\right)^2 \left(1 + \frac{1}{h}\right)^2 }+\frac{1}{h^2 \left(1 + \frac{1}{h}\right)^2}. \label{eq:thm1_c1}
    \end{align}
    \item If $\frac{s_j^2}{\sigma_{\beta}^2} = h$, for all $j=1,\dots,N_F$, then
    \begin{equation}
        \mathrm{Var}\left(\hat{\beta}_f\right) \leq c_2(h) s_f^2,
    \end{equation}
    where
    \begin{equation}\nonumber
       c_2(h) = \frac{1}{\left(1 + h\right)^2} + \frac{2}{h\left(1 + \frac{1}{h}\right)^2} + \frac{1}{\left(1 + \frac{1}{h}\right)^2}.
    \end{equation}
    \end{enumerate}
\end{theorem}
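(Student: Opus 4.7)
The plan is to start from the variance bound \eqref{ineq:var_beta_hat} in Proposition~\ref{thm:2}, substitute $\sigma_\beta^2 = s_f^2/h$, and show that the remaining summand admits the required bound in each case. Under this substitution, the first two terms of \eqref{ineq:var_beta_hat} immediately become $\frac{s_f^2}{(1+h)^2} + \frac{2 s_f^2}{h(1+1/h)^2}$, which reproduces the leading two terms of both $c_1(h)s_f^2$ and $c_2(h)s_f^2$. All remaining work therefore consists of bounding
\begin{equation*}
T := \frac{1}{(1+1/h)^2}\sum_{j=1}^{N_F} \frac{w_j^2 s_j^2}{W^2},
\end{equation*}
where $w_j := (\sigma_\beta^2 + s_j^2)^{-1}$ and $W := \sigma_\mu^{-2} + \sum_k w_k$.

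For Part~1, I would split $T = T_f + T_{-f}$ according to whether $j = f$. The $j=f$ contribution is controlled with the coarse lower bound $W \geq \sigma_\mu^{-2}$: since $w_f^2 s_f^2 = h/[\sigma_\beta^2(1+h)^2]$, we obtain $T_f \leq \frac{\sigma_\mu^4 \, h}{\sigma_\beta^2(1+h)^2(1+1/h)^2}$, and applying $\sigma_\mu^2/\sigma_\beta^2 \leq c$ together with $s_f^2 = h\sigma_\beta^2$ reduces this to $\frac{c^2 s_f^2}{(1+h)^2(1+1/h)^2}$, the third term of $c_1(h)s_f^2$. For $T_{-f}$, the plan is to use \eqref{thm1:cond_1}: since $x\mapsto x/(1+x)$ is increasing, $s_j^2/\sigma_\beta^2 \leq 1/h$ yields $w_j s_j^2 \leq 1/(1+h)$ for each $j \neq f$, so $\sum_{j\neq f} w_j^2 s_j^2 = \sum_{j\neq f} w_j (w_j s_j^2) \leq W/(1+h)$, giving $\sum_{j\neq f} w_j^2 s_j^2/W^2 \leq 1/[(1+h)W]$. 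The crucial final step is a sharper lower bound on $W$: for $N_F \geq 2$, condition \eqref{thm1:cond_1} forces every $w_k$ with $k \neq f$ to satisfy $w_k \geq h/[\sigma_\beta^2(1+h)]$, and combining one such term with $w_f = 1/[\sigma_\beta^2(1+h)]$ gives $W \geq 1/\sigma_\beta^2$. Plugging in yields $T_{-f} \leq \sigma_\beta^2/[(1+h)(1+1/h)^2]$; the elementary inequality $1/(1+h) \leq 1/h$ together with $\sigma_\beta^2 = s_f^2/h$ then produces $T_{-f} \leq s_f^2/[h^2(1+1/h)^2]$, matching the fourth term of $c_1(h)s_f^2$. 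The $N_F = 1$ case is vacuous since the sum over $j \neq f$ is empty.

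Part~2 follows by a direct computation under the uniform assumption $s_j^2/\sigma_\beta^2 = h$ for every $j$: all $w_j$ equal $1/[\sigma_\beta^2(1+h)]$, so $W \geq \sum_k w_k = N_F/[\sigma_\beta^2(1+h)]$ and $\sum_j w_j^2 s_j^2 = N_F \cdot h/[\sigma_\beta^2(1+h)^2]$. The ratio telescopes to $\sum_j w_j^2 s_j^2/W^2 \leq h\sigma_\beta^2/N_F \leq s_f^2$, and multiplying by $1/(1+1/h)^2$ delivers the third term of $c_2(h)s_f^2$.

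The main obstacle is choosing the right lower bound on $W$ in the $T_{-f}$ analysis of Part~1: using only $W \geq \sigma_\mu^{-2}$ re-introduces a factor of $c$ that is absent from the stated fourth term, while using only $W \geq w_f$ is loose by a factor of $h$. The correct move is to pair $w_f$ with even a single one of the uniformly lower-bounded $w_k$'s supplied by \eqref{thm1:cond_1}, which delivers the tight $W \geq 1/\sigma_\beta^2$ that makes the $h^{-2}$ factor emerge cleanly.
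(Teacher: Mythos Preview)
Your proof is correct and follows the paper's overall structure: both start from the variance bound of Proposition~\ref{thm:2}, reduce the first two terms identically, split the sum into the $j=f$ and $j\neq f$ contributions, and handle the $j=f$ piece and Part~2 in essentially the same way.

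The only substantive difference is the treatment of $T_{-f}$ in Part~1. You bound $w_j s_j^2 \leq 1/(1+h)$ via monotonicity of $x\mapsto x/(1+x)$, reduce $T_{-f}$ to $1/[(1+h)(1+1/h)^2 W]$, and then work to establish $W \geq 1/\sigma_\beta^2$ by pairing $w_f$ with one other $w_k$ (needing $N_F\geq 2$). The paper instead uses the simpler observation that $\sum_j (w_j/W)^2 \leq \big(\sum_j w_j/W\big)^2 \leq 1$, pulls $\max_{j\neq f} s_j^2 \leq \sigma_\beta^2/h$ out of the sum, and obtains $T_{-f} \leq \sigma_\beta^2/[h(1+1/h)^2] = s_f^2/[h^2(1+1/h)^2]$ directly, with no lower bound on $W$ needed at all. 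Your intermediate estimate $\sigma_\beta^2/[(1+h)(1+1/h)^2]$ is actually slightly sharper than the paper's before you relax it via $1/(1+h)\leq 1/h$, but the paper's route sidesteps the step you flagged as ``the main obstacle.''
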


\begin{proof}
To prove part (1) we bound each term on the RHS of inequality \eqref{ineq:var_beta_hat} separately. In this respect, we have that 
\begin{equation} \label{eq:thm1_proof_1}
    \frac{s_f^2}{\left(1 + \frac{s_f^2}{\sigma_{\beta}^2}\right)^2} = \frac{1}{(1+h)^2}s_f^2.
\end{equation}
Furthermore,
\begin{equation} \label{eq:thm1_proof_2}
    \frac{2 \sigma_{\beta}^2}{\left(1 + \frac{\sigma_{\beta}^2}{s_f^2}\right)^2} = \frac{2}{h\left(1+\frac{1}{h}\right)^2}s_f^2,
\end{equation}
where we used that $s_f^2 = h\sigma_{\beta}^2$ by definition of $h$.

Finally, we consider the last term on the RHS of inequality \eqref{ineq:var_beta_hat}. For this note that the term for $j=f$ under the sum can be estimated as follows:
\begin{align}
   &\frac{1}{\left(1 + \frac{\sigma_{\beta}^2}{s_f^2}\right)^2} \left(\frac{\big(\sigma_{\beta}^2 + s_f^2\big)^{-1}}{\sigma_{\mu}^{-2} + \sum_{k=1}^{N_F} \big(\sigma_{\beta}^2 + s_k^2\big)^{-1}}\right)^2  s_f^2 \nonumber \\
   &\leq
   \frac{\sigma_{\mu}^4}{\left(1+\frac{1}{h}\right)^2} \frac{s_f^2}{\left(\sigma_{\beta}^2 + s_f^2\right)^2} \nonumber \\
   &= \frac{\sigma_{\mu}^4}{\sigma_{\beta}^4} \frac{1}{(1+h)^2 \left(1 + \frac{1}{h}\right)^2}s_f^2, \label{eq:thm1_proof_3}
\end{align}
where we used the definition of $h$. In addition, note that 
\begin{equation} \nonumber
    \sum_{j=1}^{N_F} \left(\frac{\big(\sigma_{\beta}^2 + s_j^2\big)^{-1}}{\sigma_{\mu}^{-2} + \sum_{k=1}^{N_F} \big(\sigma_{\beta}^2 + s_k^2\big)^{-1}}\right)^2 \leq 1,
\end{equation}
which implies that 
\begin{align}
    &\frac{1}{\left(1 + \frac{\sigma_{\beta}^2}{s_f^2}\right)^2}\sum_{j\neq f} \left(\frac{\big(\sigma_{\beta}^2 + s_j^2\big)^{-1}}{\sigma_{\mu}^{-2} + \sum_{k=1}^{N_F} \big(\sigma_{\beta}^2 + s_k^2\big)^{-1}}\right)^2 s_j^2 \nonumber \\
    &\leq 
    \frac{\sigma_{\beta}^2}{h\left(1 + \frac{1}{h}\right)^2}
    =\frac{1}{h^2\left(1 + \frac{1}{h}\right)^2} s_f^2.\label{eq:thm1_proof_4}
\end{align}
Combining \eqref{eq:thm1_proof_1} -- \eqref{eq:thm1_proof_4} yields the result.

Part (2) of the theorem follows after substituting $\frac{s_j^2}{\sigma_{\beta}^2}$, $j=1,\dots,N_F$, with $h$ in inequality \eqref{ineq:var_beta_hat}.
\end{proof}
\begin{remark}
 Theorem \ref{thm:3} introduces a variable $h$ expressing the uncertainty $s_f^2$ in the data as a fraction of the uncertainty $\sigma_{\beta}^2$ in the prior for $\beta$. This, together with condition \eqref{thm1:cond_1}, impose a gap relative to $\sigma_{\beta}^2$ between $s_f^2$ for content-context combination $f$ and $s_j^2$ for all other content-context combinations.
Additionally, note that for $c_1(h)$ in \eqref{eq:thm1_c1}, we have that $c_1(h) = O\left(\frac{1}{h}\right)$. Hence, inequality \eqref{ineq:thm1_main_1} shows that for large enough $h$ the variance of $\hat{\beta}_f$ shrinks below the variance $s_f^2$ of the maximum likelihood estimator.
\end{remark}

\begin{remark}
Part (2) of Theorem \ref{thm:3} suggests that in the absence of any difference in uncertainties between content-context combinations the Bayesian estimator $\hat{\beta}_f$ may not exhibit lower variance relative to the maximum likelihood estimator as 
\begin{equation} \nonumber
    c_2(h) = O(1), \quad \text{as}\; h\rightarrow \infty.
\end{equation}
\end{remark}

\subsection{Pseudocode for hierarchical Bayesian inference} \label{app:ps_code_BHM}

As outlined in Sec. \ref{subsec:HB_inference} above we perform inference on the Bayesian hierarchical model in \eqref{eq:beta_full_model} -- \eqref{eq:sigma_full_model} using numpyro and JAX. This requires a functional specification of the hierarchical model allowing us to fit its parameters using MCMC. To enable reproducibilty of our results, this functional pseudocode of the Bayesian hierarchical model is described in Algorithm \ref{alg:BHM_func}. We translated this into Python code in numpyro \cite{phan2019composable}, a popular probabilistic programming language.

\begin{algorithm}
\caption{Functional form of the Bayesian hierarchical model}\label{alg:BHM_func}
\begin{algorithmic}
\Require Design matrix $X$, mean $m$ and standard deviation $s$ for prior $\mu$, scale parameter $b$ for the Half-Cauchy prior $\sigma$, observed assignments $a$ and observed responses $r$
\Procedure{model}{$X, a, r$}
\State $\epsilon \sim \mathrm{Normal}(0, 1) $ 
\State $\mu \sim \mathrm{Normal}(m, s^2) $ \Comment{sample prior mean $\mu$}
\State $\sigma \sim \mathrm{HalfCauchy}(b) $ \Comment{sample prior $\sigma$}
\State $\beta \sim \mathrm{Normal}(\mu, \sigma^2) $ \Comment{sample coefficients $\beta$}
\State $\hat{r}^{h}_f \gets g(X\beta + \epsilon) $  \Comment{response probabilities}
\State $r = \mathrm{Binomial}(a, \hat{r}^{h}_f)$ \Comment{modelled vs. observed responses}
\EndProcedure
\end{algorithmic}
\end{algorithm}
Note that for the Bayesian hierarchical model in \eqref{eq:beta_full_model} -- \eqref{eq:sigma_full_model} of Sec.~\ref{subsec:HB_inference} we have that $m=0$, $s^2 = 100$ and $b=5$.

After specifying a functional form for the Bayesian hierarchical model we proceed to infer its parameters. Specifically, we estimate the parameters $\beta, \mu, \sigma$ and $\hat{r}^{h}_f$ using the No-U-Turn Sampler (NUTS) \cite{Hoffman2014TheCarlo} which is an extension of Hamiltonian Monte Carlo. We also refer readers to \cite{Betancourt2015HamiltonianModels}, which addresses the application of Hamiltonian Monte Carlo methods specifically to hierarchical models.

\begin{algorithm}
\caption{Fit the Bayesian hierarchical model using MCMC}\label{alg:BHM_func_fit}
\begin{algorithmic}
\Require Design matrix $X$, observed assignments $a$ and observed responses $r$
\Procedure{fit}{$X, a, r$}
\State $\mathrm{model} \gets \mathrm{NUTS}(\mathrm{MODEL}).\mathrm{fit}(X, a, r)$
\EndProcedure
\end{algorithmic}
\end{algorithm}

The model fitted in Algorithm \ref{alg:BHM_func_fit} can be used to generate samples from the posterior distribution of $\hat{r}^{h}_f$ for each factor vector $f$. These samples are then used to approximate Bayes factors in Sec. \ref{statistical_hypothesis_testing} above to perform statistical hypothesis testing. 

\balance

\end{document}